\documentclass[sigconf]{acmart}
\AtBeginDocument{%
  }

\usepackage{amsmath}
\usepackage{amssymb}
\usepackage{amsthm}
\usepackage{bm}
\usepackage{placeins}
\usepackage{array}
\usepackage{tabularx}
\usepackage{pifont}
\usepackage{colortbl}
\usepackage{booktabs} 
\usepackage{multirow}  
\usepackage{makecell} 
\DeclareMathOperator*{\argmax}{arg\,max}

\copyrightyear{2026}
\acmYear{2026}
\setcopyright{cc}
\setcctype{by}
\acmConference[MM '26]{Proceedings of the 34th ACM International Conference on Multimedia}{November 10--14, 2026}{Rio de Janeiro, Brazil}
\acmBooktitle{Proceedings of the 34th ACM International Conference on Multimedia (MM '26), November 10--14, 2026, Rio de Janeiro, Brazil}
\acmDOI{10.1145/3767308.3836429}
\acmISBN{979-8-4007-2213-4/2026/11}

\begin{document}

\title{Majorization-Guided Test-Time Adaptation for Vision-Language Models under Modality-Specific Shift}

\author{Lixian Chen}
\affiliation{%
  \institution{Guangdong University of Technology}
  \city{Guangzhou}
  \country{China}
}
\email{3123003175@mail2.gdut.edu.cn}

\author{Mingxuan Huang}
\affiliation{%
  \institution{Sun Yat-Sen University}
  \city{Guangzhou}
  \country{China}
}
\email{huangmx53@mail2.sysu.edu.cn}

\author{Yanhui Chen}
\affiliation{%
  \institution{Guangdong University of Technology}
  \city{Guangzhou}
  \country{China}
}
\email{chenyanhui91@mails.gdut.edu.cn}

\author{Junyi Lin}
\affiliation{%
  \institution{Guangdong University of Technology}
  \city{Guangzhou}
  \country{China}
}
\email{3123001378@mail2.gdut.edu.cn}

\author{Yang Shi}
\affiliation{%
  \institution{Guangdong University of Technology}
  \city{Guangzhou}
  \country{China}
}
\email{sudo.shiyang@gmail.com}

\renewcommand{\shortauthors}{Chen et al.}

\begin{abstract}
Vision--language models can face asymmetric visual and textual shifts
at deployment. These shifts expose a multimodal failure mode in which an
unreliable branch remains overconfident, dominates fusion, and causes
entropy-based test-time adaptation to sharpen an incorrect prediction.
We model this behavior as doubly stochastic posterior mixing and cast
adaptation as constrained de-mixing. Majorization-Guided Multimodal
Test-Time Adaptation (MG-MTTA) freezes
both encoders and updates only a lightweight fusion module. Running-anchor
consistency estimates relative branch drift, while cross-modal conflict
regulates modality dominance before entropy sharpening. The analysis gives
sufficient conditions for entropy reduction to preserve the clean decision
and an explicit threshold at which a biased modality reverses the fused
ranking. Across visual, textual, and joint shifts, MG-MTTA improves
ImageNet top-1 accuracy from 57.97\% to 66.51\% under textual shift and
from 21.68\% to 26.27\% under joint shift, while reducing
wrong-more-confident failures. The largest gains occur under textual and joint shifts, where the two
branches differ more in reliability. Project page:
\url{https://mg-mtta.github.io/}.
\end{abstract}

\begin{CCSXML}
<ccs2012>
   <concept>
       <concept_id>10010147.10010257.10010258.10010262.10010277</concept_id>
       <concept_desc>Computing methodologies~Transfer learning</concept_desc>
       <concept_significance>500</concept_significance>
       </concept>
   <concept>
       <concept_id>10010147.10010178.10010224.10010225</concept_id>
       <concept_desc>Computing methodologies~Computer vision tasks</concept_desc>
       <concept_significance>500</concept_significance>
       </concept>
 </ccs2012>
\end{CCSXML}

\ccsdesc[500]{Computing methodologies~Transfer learning}
\ccsdesc[500]{Computing methodologies~Computer vision tasks}

\keywords{vision-language models, test-time adaptation, modality-specific shift,
majorization, multimodal adaptation}

\maketitle

\section{Introduction}
\label{sec:introduction}

Vision--language models (VLMs) align visual and textual representations
at scale and support zero-shot transfer through natural-language prompts
~\cite{vlm_clip,vlm_siglip}. Flamingo extends this pretraining paradigm
to few-shot multimodal interaction~\cite{vlm_flamingo}. At deployment,
however, both branches face changing conditions: image corruption can
perturb visual evidence, while prompt wording, context, tokenization, or
metadata can alter textual evidence. Prompt learning improves task
adaptation~\cite{zhou2022learning}, but does not eliminate this test-time
variability.

\begin{figure*}[t]
\centering
\includegraphics[width=0.95\textwidth,trim={2.0cm 2.5cm 3.0cm 5.0cm},clip]{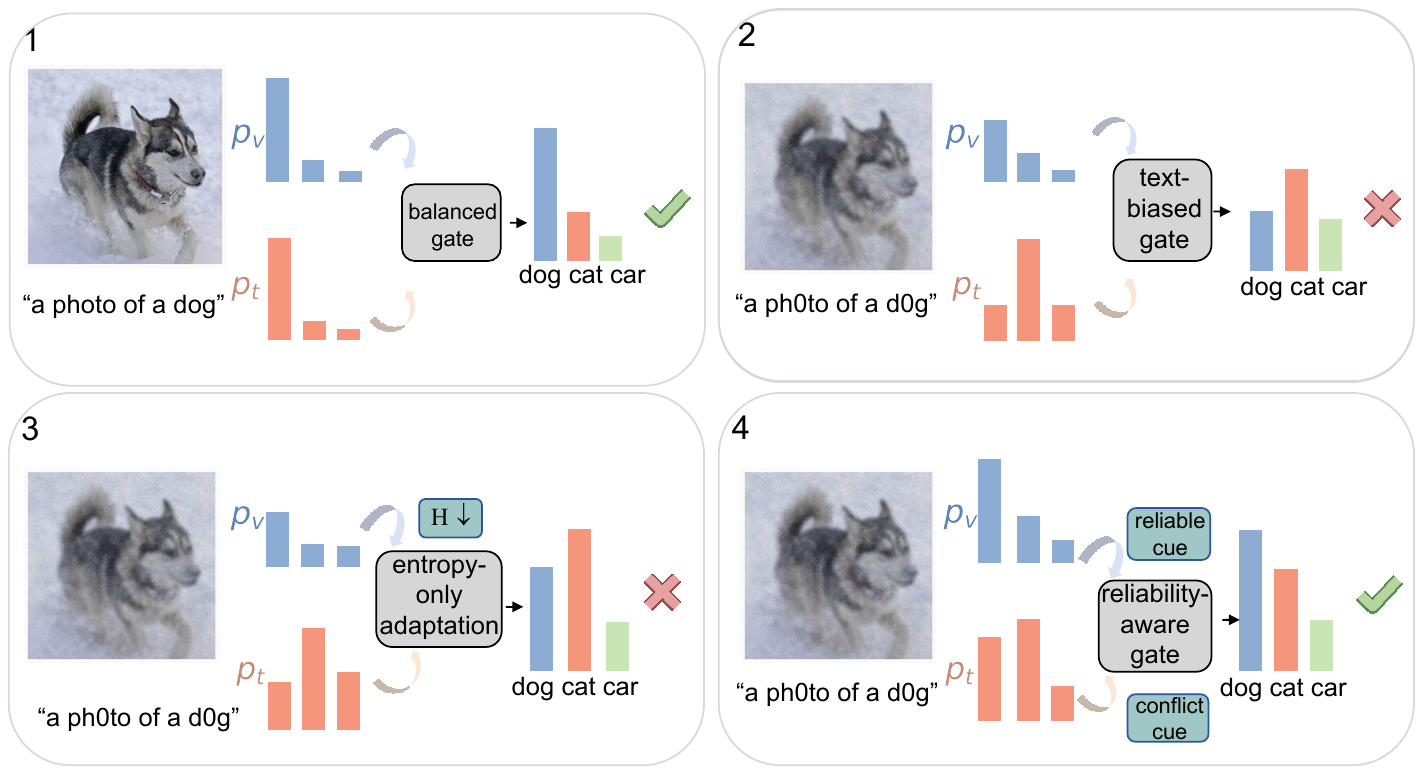}
    \caption{
    \textbf{Motivation of MG-MTTA.}
    Under modality-specific shift, entropy-driven adaptation can increase
    confidence without checking whether the dominant modality is
    reliable. MG-MTTA regulates modality contributions before posterior
    sharpening, reducing confident errors caused by unreliable modality
    dominance.
    }
    \label{fig:motivation}
\end{figure*}

When the branches shift differently, confidence ceases to be a common
scale of reliability. A degraded branch may still produce a concentrated
posterior and receive excessive fusion weight. Prompt-robustness and
multimodal-adaptation studies report related forms of this
\emph{reliability imbalance}~\cite{towards_robust_prompts,prompt_shift,
carot_vlm,reliability_bias}. The fusion rule should therefore account for relative branch reliability
when assigning modality weights.

Entropy minimization, a standard objective in test-time adaptation
~\cite{tent,eata,sar}, increases prediction confidence without identifying
which branch warrants that confidence. Under modality-specific shift,
fusion may become sharper because the unreliable modality is amplified.
The adapted prediction is then incorrect and more confident than its
pre-adaptation counterpart. We call this pattern a wrong-more-confident
(WMC) failure. As Fig.~\ref{fig:motivation} illustrates, fused confidence
alone cannot distinguish reliable agreement from biased modality dominance.
This highlights the difference between posterior concentration and
modality selection: entropy controls prediction sharpness, but not which
branch should dominate the fused decision.

MG-MTTA adapts the fusion path while keeping both encoders fixed. We model
modality-specific degradation as posterior mixing and use majorization to
characterize a valid de-mixing direction
~\cite{majorization_book,entropy_majorization_book}. Since clean
posteriors are unavailable at test time, running-anchor consistency tracks
relative branch drift, while Jensen--Shannon divergence and ranking
disagreement quantify cross-modal conflict~\cite{lin1991js}. These signals
define a prior over the fusion gate and constrain the modality direction
before entropy sharpening. Both encoders remain frozen; adaptation is
confined to the lightweight fusion module.

Our main contributions are as follows.
\begin{itemize}
    \item We identify a confidence--reliability mismatch in multimodal
    test-time adaptation: an unreliable modality can dominate fusion and
    turn entropy reduction into WMC failure.
    \item We formulate reliability-aware adaptation as constrained
    posterior de-mixing, deriving sufficient conditions for beneficial
    concentration and an explicit modality-dominance threshold.
    \item We implement the formulation as a lightweight fusion update and
    evaluate it through robustness results, reliability diagnosis, WMC
    analysis, and a strongest-probe objective ablation.
\end{itemize}

\section{Related Work}
\label{sec:related_work}

\subsection{VLM adaptation.}
CLIP and SigLIP learn transferable image--text representations from
large-scale paired data, and Flamingo extends visual--language pretraining
to few-shot multimodal generation and reasoning
~\cite{vlm_clip,vlm_siglip,vlm_flamingo}. More recently, Orca learns a
unified multimodal world latent space and evaluates it with a frozen
backbone and lightweight modality-specific readouts~\cite{wang2026orca}.
CoOp improves task adaptation by
replacing handcrafted context words with learnable prompt
vectors~\cite{zhou2022learning}. Later work studies distribution-aware
prompting, prompt consistency, cross-modal alignment, and robustness to
wording or domain changes
~\cite{prompt_shift,towards_robust_prompts,cpl_vlm,
craft_prompt_robust,carot_vlm}. At test time, TPT optimizes prompts from
augmented views, TDA uses a training-free dynamic cache, and comparative
studies examine how conventional online TTA transfers to
VLMs~\cite{tpt,tda,vte}. BATCLIP further adapts visual structure and
image--text alignment under common visual corruptions~\cite{batclip},
while other work considers unequal modality quality in general multimodal
models~\cite{reliability_bias}. These methods mainly adapt prompts,
representations, or alignment, whereas MG-MTTA keeps both VLM encoders
fixed and controls the fusion of branch predictions when confidence and
modality reliability disagree.

\subsection{Test-time adaptation.}
TTA updates a source model from unlabeled target data without source
retraining; a recent survey summarizes the main settings and design
choices~\cite{tta_survey}. TENT adapts normalization parameters by entropy
minimization, EATA filters unreliable samples and limits forgetting,
CoTTA stabilizes continual adaptation, and SAR combines reliable-sample
selection with sharpness-aware optimization
~\cite{tent,eata,cotta,sar}. These methods address confidence, stability,
and online drift along a single prediction path. Multimodal deployment
adds the need to compare the reliability of several paths before forming
the fused output.

\subsection{Reliability-aware multimodal fusion.}
Uncertainty compensation and adaptive weighting have long been used to
reduce the influence of noisy modalities
~\cite{uncertainty_compensation_fusion,multimodal_fusion}. Related
composed-retrieval studies also examine cross-modal inconsistency, modality
suppression under noisy correspondence, and directional bias in multimodal
composition~\cite{COMBINER,ConeSep,ReTrack}. Recent work studies provable
dynamic fusion for low-quality inputs, probabilistic late
fusion, and uncertainty-guided meta-learning
~\cite{provable_dynamic_fusion,reliability_fusion,
uncertainty_multimodal}. These methods assign modality weights according
to evidence quality. MG-MTTA follows this reliability-aware perspective
in source-free VLM adaptation and relates modality weighting to confidence
amplification under modality-specific shift.

\subsection{Majorization and entropy.}
Majorization orders probability vectors by concentration, with
Schur-concavity linking that order to entropy
~\cite{majorization_book,entropy_majorization_book}. Jensen--Shannon
(JS) divergence is a symmetric, bounded measure of distributional
disagreement~\cite{lin1991js}. In our analysis, majorization describes a
candidate de-mixing direction, whereas JS divergence provides an
observable conflict signal. Concentration and disagreement alone do not
certify correctness, so the theory also considers ranking preservation
and proximity to the clean margin.

\section{Method}
\label{sec:method}

\begin{figure*}[t]
\centering
\includegraphics[width=0.98\textwidth,trim={3.0cm 7.0cm 0 4.5cm},clip]
{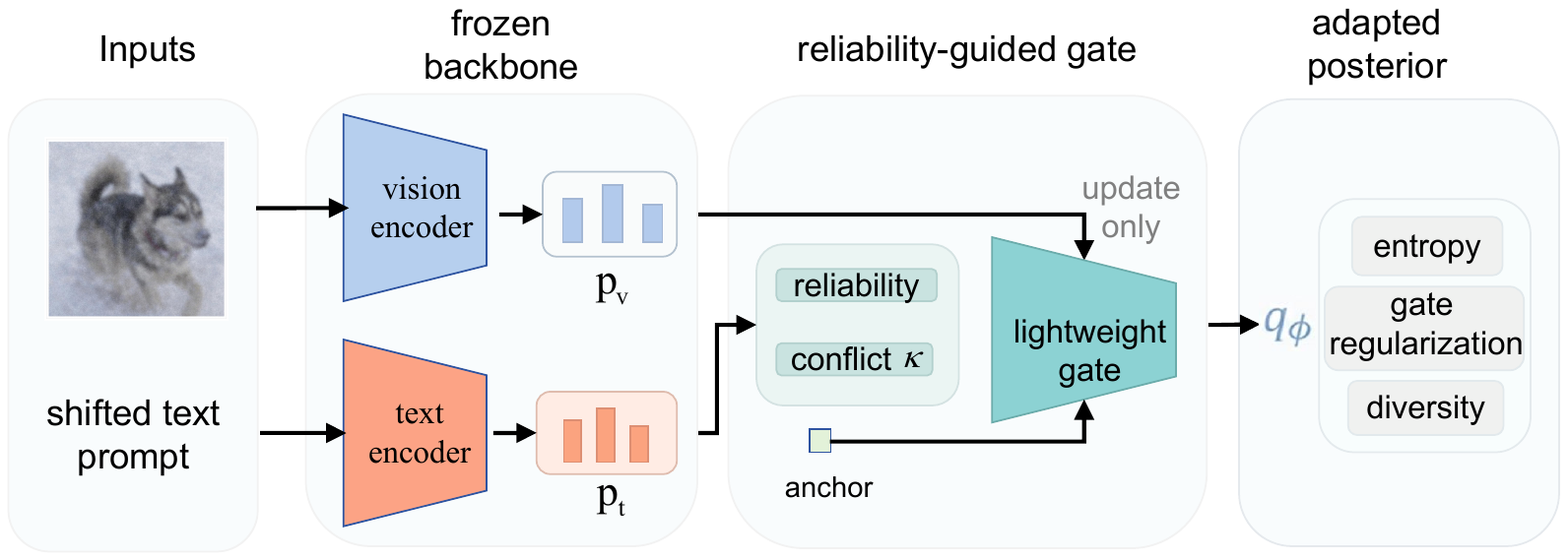}

\caption{
\textbf{Overview of MG-MTTA.}
The modality encoders remain frozen during adaptation.
MG-MTTA estimates modality reliability from prediction statistics
and uses reliability-aware fusion to regulate modality
contributions through an adaptive fusion gate.
}

\label{fig:framework}
\end{figure*}

\subsection{Reliability Mismatch as Posterior Control}
\label{sec:problem}

At test time, MG-MTTA controls modality contribution rather than encoder
representation. Entropy-based objectives specify how concentrated a
prediction becomes, but do not identify whether the branch driving that
concentration is reliable. Under modality-specific shift, an overconfident
branch can therefore dominate fusion and turn adaptation into a confident
error.

We formulate the problem as \textbf{reliability-aware posterior control}.
A majorization-based model describes posterior mixing, anchor drift and
cross-modal disagreement supply observable reliability cues, and a gate
prior uses these cues to regulate modality contribution during
entropy-based adaptation.

An overview of MG-MTTA is shown in Fig.~\ref{fig:framework}.
Given an unlabeled test sample $z=(x^v,x^t)$ with $K$ classes, the frozen
encoders produce modality logits
$h^v(z),h^t(z)\in\mathbb{R}^{K}$ and posteriors

\begin{equation}
p^m(z)=\operatorname{softmax}(h^m(z)),
\qquad m\in\{v,t\}.
\label{eq:modality_posteriors}
\end{equation}

The deployed fusion operates at the logit level:

\begin{align}
h_\phi^f(z)
&=
\alpha_\phi(z)h^v(z)
+
(1-\alpha_\phi(z))h^t(z),
\label{eq:fused_logit}\\
q_\phi(z)
&=
\operatorname{softmax}(h_\phi^f(z)).
\label{eq:fused_posterior}
\end{align}

where $\alpha_\phi(z)\in[0,1]$ is produced by the lightweight fusion
module parameterized by $\phi$. We denote its source initialization by
$\phi_0$ and write $p^f(z)=q_{\phi_0}(z)$ as the fused posterior before
adaptation.

For a test batch $\mathcal B$, the adaptation objective is

\begin{equation}
\mathcal L
=
\mathcal L_{\mathrm{ent}}
+
\lambda_g\mathcal L_{\mathrm{gate}}
+
\lambda_d\mathcal L_{\mathrm{div}}.
\label{eq:objective}
\end{equation}

Entropy promotes posterior concentration, the gate loss controls modality
preference, and diversity discourages collapse of the batch marginal.

The reliability terms constrain only the relative contribution of the
two branches and provide no class-level supervision; class selection
still follows the fused logits and entropy objective.

\paragraph{Scope of posterior control.}
A modality-specific shift need not erase all discriminative evidence from
the affected branch. Both posteriors may remain informative even after
their confidence values cease to be comparable measures of quality; the
error then lies in how the evidence is weighted. Updating an encoder may
improve its features, but does not directly constrain modality dominance.
MG-MTTA isolates this fusion error by treating the gate as the control
variable. It cannot reconstruct evidence lost by both encoders, but it
can prevent the remaining evidence from being weighted by confidence
alone. Posterior control is thus a lightweight complement to
representation-level adaptation.

\subsection{Posterior Mixing and Majorization}
\label{sec:mixing}

We first characterize how modality shift changes the branch posteriors.
The degradation is modeled as a mixing process:

\begin{equation}
p^m(z)
=
D^m(z)\pi^m(z)+r^m(z),
\qquad m\in\{v,t\},
\label{eq:mixing}
\end{equation}

where $D^m(z)$ is doubly stochastic and $r^m(z)$ captures deviations
from ideal mixing.

For posterior-space analysis, we define the latent clean fused target

\begin{equation}
\pi^f(z)
=
\beta^\star(z)\pi^v(z)
+
(1-\beta^\star(z))\pi^t(z),
\label{eq:clean_fusion}
\end{equation}

where $\beta^\star(z)\in[0,1]$ is the desired clean modality weight.
Eq.~\eqref{eq:clean_fusion} is used only for analysis; deployment
follows the logit-level rule in
Eqs.~\eqref{eq:fused_logit}--\eqref{eq:fused_posterior}. We do not equate
a convex mixture of clean posteriors with the softmax of fused logits.
The former is used for posterior-mixing analysis, whereas the latter
corresponds to the actual logit-level prediction path used at test time.

The posterior-space formulation is used for the majorization analysis,
whereas the deployed model performs logit-level fusion. Majorization is used to
reason about concentration in probability space, whereas the dominance
analysis in Sec.~\ref{sec:theory} returns to logit space, where changing
$\alpha_\phi$ can reverse the ordering of two classes.

For $u,w\in\Delta^{K-1}$, let their sorted coordinates be
$u_{[1]}\ge\cdots\ge u_{[K]}$ and
$w_{[1]}\ge\cdots\ge w_{[K]}$.
We say that $u$ majorizes $w$, written as $u\succeq w$, if

\begin{equation}
\sum_{i=1}^{k}u_{[i]}
\ge
\sum_{i=1}^{k}w_{[i]},
\quad k=1,\ldots,K-1,
\label{eq:majorization}
\end{equation}

with equality at $k=K$. Thus, $u$ is more concentrated, or less mixed,
than $w$.

By majorization theory, doubly stochastic transformations make a
distribution more mixed:

\begin{equation}
u\succeq Du
\Longrightarrow
H(u)\le H(Du),
\label{eq:entropy_majorization}
\end{equation}

where $H(u)=-\sum_k u_k\log u_k$.

Eq.~\eqref{eq:entropy_majorization} suggests de-mixing as a recovery
direction, but \textbf{lower entropy does not imply a correct
prediction}. A posterior may sharpen around the wrong class, so the
direction of concentration must also be controlled.

\subsection{Observable Reliability and Fusion Control}
\label{sec:reliability}

The clean posterior and mixing operator are unavailable at test time.
MG-MTTA replaces them with two observable signals: anchor consistency for
branch-specific drift and cross-modal conflict for incompatible modality
decisions.

\paragraph{Anchor-based reliability.}

An ideal residual would measure how well the observed posterior can be
represented as a doubly stochastic transformation of a clean anchor:

\begin{equation}
\rho_m^\star(z)
=
\min_{D\in\mathcal D_K}
\left\|
p^m(z)-D\bar\pi^m
\right\|_1,
\label{eq:ideal_residual}
\end{equation}

where $\mathcal D_K$ denotes the set of $K\times K$ doubly stochastic
matrices.

Since the clean anchor $\bar\pi^m$ and the optimal transformation are
unavailable at test time, we maintain a running anchor $\bar p^m$ and
use the following practical proxy:

\begin{equation}
\tilde\rho_m(z)
=
\left\|
\operatorname{sort}_{\downarrow}(p^m(z))
-
\operatorname{sort}_{\downarrow}(\bar p^m)
\right\|_1 .
\label{eq:proxy}
\end{equation}

A smaller $\tilde\rho_m$ indicates greater consistency with the modality
anchor. Sorting retains the concentration profile used by majorization
but removes class identity. Because sorting removes class identity, the proxy can detect profile
drift but cannot capture semantic class swaps on its own; cross-modal
conflict therefore accounts for disagreement in class predictions.
Accordingly, \textbf{$\tilde\rho_m$ is a relative reliability proxy, not
an estimator of the latent clean posterior.}

The running anchor serves as a local temporal reference rather than a
fixed clean-domain prototype. Consequently, the proxy measures departure
from recent stable behavior and may inherit errors from that reference.
Confidence filtering and EMA updating reduce abrupt contamination, but do
not turn the anchor into ground-truth supervision.

For modality $m$, confident samples and the anchor update are defined as

\begin{align}
\mathcal B_m^{\mathrm{conf}}
&=
\{z\in\mathcal B:
\max_k p_k^m(z)\ge\eta\},
\label{eq:confident_set}\\
\hat p^m
&=
\frac{1}{|\mathcal B_m^{\mathrm{conf}}|}
\sum_{z\in\mathcal B_m^{\mathrm{conf}}}
p^m(z),
\label{eq:batch_anchor}\\
\bar p^m
&\leftarrow
\mu\bar p^m+(1-\mu)\hat p^m.
\label{eq:anchor_update}
\end{align}

where $\eta$ is the confidence threshold and $\mu$ is the EMA momentum.
The anchor remains unchanged when
$\mathcal B_m^{\mathrm{conf}}=\varnothing$.

\paragraph{Cross-modal conflict.}

Anchor consistency does not show whether the modalities currently support
incompatible decisions. Let their midpoint posterior be

\begin{equation}
s(z)=\frac{1}{2}\left(p^v(z)+p^t(z)\right).
\label{eq:midpoint}
\end{equation}

The probability- and ranking-level conflict terms are

\begin{align}
\operatorname{JS}(p^v,p^t)
&=
\frac12\operatorname{KL}(p^v\|s)
+
\frac12\operatorname{KL}(p^t\|s),
\label{eq:js_conflict}\\
R(p^v,p^t)
&=
\frac{2N_{\mathrm{disc}}(p^v,p^t)}
{K(K-1)},
\label{eq:rank_conflict}\\
\kappa(z)
&=
\operatorname{JS}(p^v,p^t)
+
\lambda_rR(p^v,p^t).
\label{eq:combined_conflict}
\end{align}

where $N_{\mathrm{disc}}$ counts class pairs ordered differently by the
two modalities, and the normalization maps $R$ to $[0,1]$. JS measures
probability-level disagreement~\cite{lin1991js}; $R$ records ranking
conflict. The two are complementary: a small probability change may
reverse near-tied classes, whereas ranking disagreement ignores the size
of the probability gap.

Using the complete ranking also retains information beyond top-one
agreement. Two branches may predict the same class while ordering the
remaining alternatives differently, which can matter once the gate moves
or the leading classes are close. Conversely, a rank reversal with a
negligible probability gap should not carry the same weight as a large
probability disagreement.

Entropy, anchor deviation, and conflict describe different parts of the
adaptation state. Entropy measures concentration but can be low for an
incorrect class. Anchor deviation tracks departure from a recent branch
profile, while conflict shows whether the two branches support
incompatible decisions. A drifting branch may still agree with the other
branch, and disagreement may occur even when their deviations are
similar. Relative anchor deviation therefore sets the modality
preference; conflict strengthens that preference only when dominance can
change the fused decision.

\paragraph{Reliability-aware gate prior.}

The reliability statistics determine a modality preference but do not
alter the prediction by themselves. We encode that preference as a gate
prior that constrains the direction of entropy-based adaptation.

\begin{align}
d(z)
&=
\operatorname{sign}
\left(
\tilde\rho_v(z)-\tilde\rho_t(z)
\right),
\label{eq:reliability_direction}\\
\begin{bmatrix}
\ell^v(z)\\
\ell^t(z)
\end{bmatrix}
&=
-\tau
\begin{bmatrix}
\tilde\rho_v(z)\\
\tilde\rho_t(z)
\end{bmatrix}
+
\lambda_c\kappa(z)d(z)
\begin{bmatrix}
-1\\
1
\end{bmatrix},
\label{eq:gate_logits}\\
[a^v(z),a^t(z)]
&=
\operatorname{softmax}
\left(
\ell^v(z),\ell^t(z)
\right).
\label{eq:gate_prior}
\end{align}

Here $\tau>0$ controls reliability sensitivity and
$\lambda_c\ge0$ controls conflict correction. The correction is
asymmetric because adding the same offset to both logits would cancel
after softmax.

The sign in Eq.~\eqref{eq:reliability_direction} gives the correction an
explicit direction. If $\tilde\rho_v>\tilde\rho_t$, the visual branch has
the larger deviation, so the conflict term lowers $\ell^v$ and raises
$\ell^t$; the opposite occurs when the textual deviation is larger.
Conflict therefore changes the strength of an already identified
preference rather than selecting a modality independently.

The objective terms in Eq.~\eqref{eq:objective} are

\begin{align}
\mathcal L_{\mathrm{ent}}
&=
\frac{1}{|\mathcal B|}
\sum_{z\in\mathcal B}H(q_\phi(z)),
\label{eq:entropy_loss}\\
\mathcal L_{\mathrm{gate}}
&=
\frac{1}{|\mathcal B|}
\sum_{z\in\mathcal B}
\operatorname{KL}
\left(
[\alpha_\phi(z),1-\alpha_\phi(z)]
\|
[a^v(z),a^t(z)]
\right),
\label{eq:gate_loss}\\
\mathcal L_{\mathrm{div}}
&=
-H
\left(
\frac{1}{|\mathcal B|}
\sum_{z\in\mathcal B}q_\phi(z)
\right).
\label{eq:diversity_loss}
\end{align}

The objective separates modality selection from confidence improvement:
the gate loss controls dominance, entropy sharpens the posterior in the
selected direction, and diversity regularizes the batch marginal.

We use the gate prior as a soft constraint. The KL term penalizes a fusion weight that conflicts with the reliability
statistics, while still allowing the learned gate to deviate from the
prior when supported by the prediction objective. This allows flexible adaptation while preventing entropy alone from
determining modality dominance.

\paragraph{Online update.}
For each batch, the frozen encoders produce modality logits and
posteriors. Confidence-filtered statistics update the running anchors by
Eq.~\eqref{eq:anchor_update}; a branch retains its previous anchor when no
sample passes the threshold. These deviations and cross-modal conflict
define the gate prior in Eq.~\eqref{eq:gate_prior}, and gradients
from Eq.~\eqref{eq:objective} update only $\phi$. The adaptive state is
therefore limited to the fusion module and two $K$-dimensional anchors.
The statistics determine how modality contribution should move, while
entropy determines how strongly the posterior sharpens along that
controlled direction. We next analyze sufficient conditions under which this controlled
sharpening preserves the clean decision.

\subsection{Theoretical Characterization}
\label{sec:theory}

We now state conditions under which posterior control reduces entropy
without changing the clean decision.

Let

\begin{align}
c^\star
&=
\argmax_k\pi_k^f(z),
\label{eq:clean_top_class}\\
j^\star
&=
\argmax_{j\neq c^\star}\pi_j^f(z),
\label{eq:clean_runner_up}\\
\gamma(z)
&=
\pi_{c^\star}^f(z)
-
\pi_{j^\star}^f(z).
\label{eq:clean_margin}
\end{align}

where $\gamma(z)>0$ denotes the clean top-one margin.

\begin{theorem}[Beneficial de-mixing]
\label{thm:demixing}

Suppose that the adapted posterior satisfies the following three
conditions:

\begin{align}
q_\phi(z)
&\succeq p^f(z),
\label{eq:condition_majorization}\\
q_{\phi,c^\star}(z)
&>q_{\phi,j^\star}(z),
\label{eq:condition_pair_order}\\
\left\|q_\phi(z)-\pi^f(z)\right\|_\infty
&<\frac{\gamma(z)}{2}.
\label{eq:condition_proximity}
\end{align}

Then

\begin{align}
H(q_\phi(z))
&\le H(p^f(z)),
\label{eq:beneficial_entropy}\\
\argmax_k q_{\phi,k}(z)
&=c^\star.
\label{eq:beneficial_prediction}
\end{align}

\end{theorem}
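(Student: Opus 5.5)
The two conclusions are independent and I would prove them separately. Entropy reduction in \eqref{eq:beneficial_entropy} should follow from condition \eqref{eq:condition_majorization} alone, via Schur-concavity. Preservation of the clean decision in \eqref{eq:beneficial_prediction} should follow from the proximity condition \eqref{eq:condition_proximity} alone, via a margin argument. The pairwise-order condition \eqref{eq:condition_pair_order} only serves as a consistency check.

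\emph{Step 1 (entropy).} Here I would invoke the Schur-concavity of the Shannon entropy $H$ on $\Delta^{K-1}$, the fact underlying Eq.~\eqref{eq:entropy_majorization}. For completeness I would recall the standard route. By the Hardy--Littlewood--P\'olya theorem, $q_\phi(z)\succeq p^f(z)$ implies $p^f(z)=Dq_\phi(z)$ for some doubly stochastic $D$. Then concavity of $x\mapsto -x\log x$ gives
\[
H(Dq)\ \ge\ \sum_{k}\sum_i D_{ki}\bigl(-q_i\log q_i\bigr)=H(q),
\]
using $\sum_k D_{ki}=1$. This yields $H(q_\phi(z))\le H(p^f(z))$. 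Equivalently, one can cite Karamata's inequality for the convex function $x\log x$ applied to the sorted vectors. Nothing else is needed for this conclusion.

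\emph{Step 2 (decision).} Set $\varepsilon=\|q_\phi(z)-\pi^f(z)\|_\infty<\gamma(z)/2$. Fix any $k\neq c^\star$. By definition of $j^\star$ we have $\pi^f_k(z)\le\pi^f_{j^\star}(z)=\pi^f_{c^\star}(z)-\gamma(z)$. Hence
\[
q_{\phi,c^\star}(z)-q_{\phi,k}(z)\ \ge\ \bigl(\pi^f_{c^\star}-\varepsilon\bigr)-\bigl(\pi^f_{k}+\varepsilon\bigr)\ \ge\ \gamma(z)-2\varepsilon\ >\ 0 .
\]
So $c^\star$ is the unique maximizer of $q_\phi(z)$, which is \eqref{eq:beneficial_prediction}. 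Taking $k=j^\star$ recovers \eqref{eq:condition_pair_order}. I would remark that this condition is therefore implied by \eqref{eq:condition_proximity} and is stated only to make the ranking requirement explicit. I would also note that $\gamma(z)>0$ is assumed, so $c^\star$ and $j^\star$ are well defined; if ties in $\argmax$ occur elsewhere, the strict inequality above resolves them.

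\emph{Main obstacle.} There is no deep obstacle; the only point requiring care is Step 1. There I would make sure the majorization is in the sense of Eq.~\eqref{eq:majorization}, that is, with equal total mass, which holds since both vectors lie on the simplex. The Schur-concavity fact should be applied exactly in that form, rather than to unsorted coordinates. I would also emphasize in a remark that majorization alone does not give Step 2, since a concentrated $q_\phi$ could peak on the wrong class. This is precisely why the proximity hypothesis is required, consistent with the discussion after Eq.~\eqref{eq:entropy_majorization}.
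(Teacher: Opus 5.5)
Your proposal is correct and matches the paper's proof sketch: Schur-concavity of $H$ under $q_\phi(z)\succeq p^f(z)$ gives the entropy bound, and the margin estimate $q_{\phi,c^\star}-q_{\phi,j}\ge \pi^f_{c^\star}-\pi^f_j-2\|q_\phi-\pi^f\|_\infty>0$ gives the decision. Your extra details are accurate elaborations of the same argument: the Hardy--Littlewood--P\'olya/Jensen justification (Jensen uses the row sums of $D$ and the final step uses the column sums), and the observation that the pair-order condition already follows from proximity.
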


\noindent\textit{Proof sketch.}
Eq.~\eqref{eq:condition_majorization} and Schur-concavity give
Eq.~\eqref{eq:beneficial_entropy}. For every $j\neq c^\star$,

\begin{equation}
q_{\phi,c^\star}-q_{\phi,j}
\ge
\pi_{c^\star}^f-\pi_j^f
-
2\|q_\phi-\pi^f\|_\infty
>0,
\label{eq:proof_margin_bound}
\end{equation}

which proves Eq.~\eqref{eq:beneficial_prediction}. The pair-order
condition explicitly records preservation of the decisive clean pair.

The theorem makes explicit three properties that entropy minimization
alone does not guarantee: a valid concentration direction, preservation
of the decisive ranking, and proximity to the clean fused posterior. These quantities are
latent at deployment; anchor and conflict are therefore
\emph{theory-aligned control signals}, not tests of the sufficient
conditions.

In particular, majorization alone establishes only an entropy ordering.
Correctness follows only after the decisive ranking and clean-margin
conditions are included, which is why a lower-entropy adapted posterior
cannot be treated as evidence that de-mixing succeeded.

\paragraph{Interpretation of the sufficient conditions.}
Each condition excludes a different failure mode.
Eq.~\eqref{eq:condition_majorization} requires movement toward a less
mixed posterior but still permits sharpening the wrong class;
Eq.~\eqref{eq:condition_pair_order} prevents reversal of the decisive
clean pair; and Eq.~\eqref{eq:condition_proximity} defines a
margin-preserving neighborhood around the clean fusion. Since
$c^\star$, $\gamma(z)$, and $\pi^f(z)$ are unobserved, MG-MTTA cannot
certify these conditions per sample. Anchor deviation instead warns of
branch drift, and cross-modal conflict marks cases in which a wrong
modality preference can alter the fused ranking. Since the required clean quantities are unavailable at test time, these
conditions cannot be verified on individual test samples.

\paragraph{Modality-dominance threshold.}

Because the deployed rule fuses modality logits, we characterize
dominance directly in logit space. For classes $c$ and $j$, define

\begin{align}
\delta^v_{c,j}
&=
h_c^v-h_j^v,
\label{eq:visual_logit_margin}\\
\delta^t_{c,j}
&=
h_c^t-h_j^t.
\label{eq:text_logit_margin}
\end{align}

\begin{proposition}[Modality-dominance threshold]
\label{prop:dominance}

Assume
$\delta^v_{c,j}<0<\delta^t_{c,j}$,
so that the visual branch favors $j$ while the textual branch favors
$c$. If

\begin{equation}
\alpha_\phi(z)
>
\frac{
\delta^t_{c,j}(z)
}{
\delta^t_{c,j}(z)-\delta^v_{c,j}(z)
},
\label{eq:dominance_threshold}
\end{equation}

then

\begin{equation}
q_{\phi,c}(z)<q_{\phi,j}(z).
\label{eq:ranking_reversal}
\end{equation}

\end{proposition}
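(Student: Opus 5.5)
The plan is to work entirely in logit space, since the fused posterior in Eq.~\eqref{eq:fused_posterior} is a softmax of the fused logit in Eq.~\eqref{eq:fused_logit}. The softmax is strictly order-preserving, because its normalizing constant is shared across classes and the exponential is strictly increasing. So $q_{\phi,c}(z)<q_{\phi,j}(z)$ holds if and only if $h^f_{\phi,c}(z)<h^f_{\phi,j}(z)$. I will first record this equivalence. It reduces Eq.~\eqref{eq:ranking_reversal} to a sign condition on the fused logit margin $\delta^f_{c,j}:=h^f_{\phi,c}-h^f_{\phi,j}$.

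Next I expand the fused margin using linearity of Eq.~\eqref{eq:fused_logit}. Writing $\alpha=\alpha_\phi(z)$, we get
\begin{equation*}
\delta^f_{c,j}=\alpha\,\delta^v_{c,j}+(1-\alpha)\,\delta^t_{c,j}=\delta^t_{c,j}-\alpha\bigl(\delta^t_{c,j}-\delta^v_{c,j}\bigr).
\end{equation*}
This margin is affine in $\alpha$. The assumption $\delta^v_{c,j}<0<\delta^t_{c,j}$ gives $\delta^t_{c,j}-\delta^v_{c,j}>0$, so the margin is strictly decreasing in $\alpha$. It vanishes exactly at $\alpha^\dagger=\delta^t_{c,j}/(\delta^t_{c,j}-\delta^v_{c,j})$, which lies strictly inside $(0,1)$.

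Finally, if $\alpha>\alpha^\dagger$, I multiply the inequality by the positive quantity $\delta^t_{c,j}-\delta^v_{c,j}$. This gives $\alpha(\delta^t_{c,j}-\delta^v_{c,j})>\delta^t_{c,j}$, and hence $\delta^f_{c,j}<0$. The softmax equivalence then yields Eq.~\eqref{eq:ranking_reversal}.

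There is no substantial obstacle. The only point needing care is the sign of the denominator, which is what the opposing-sign hypothesis guarantees; without it the inequality could flip or the threshold could be vacuous. I would also remark that $\alpha^\dagger\in(0,1)$, so the threshold is attainable by a gate with values in $[0,1]$.
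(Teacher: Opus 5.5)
Your proof is correct and follows the paper's own argument: you write the fused logit margin as $\alpha_\phi\delta^v_{c,j}+(1-\alpha_\phi)\delta^t_{c,j}$, use the threshold to make it negative, and conclude because the softmax preserves coordinate order. Two details you make explicit are left implicit in the paper's sketch: the positivity of $\delta^t_{c,j}-\delta^v_{c,j}$, which you need when multiplying through, and the fact that the threshold lies in $(0,1)$.
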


\noindent\textit{Proof sketch.}
The fused logit margin is

\begin{equation}
h_{\phi,c}^f-h_{\phi,j}^f
=
\alpha_\phi\delta^v_{c,j}
+
(1-\alpha_\phi)\delta^t_{c,j}.
\label{eq:fused_logit_margin}
\end{equation}

Eq.~\eqref{eq:dominance_threshold} makes
Eq.~\eqref{eq:fused_logit_margin} negative, and softmax preserves
coordinate ordering, yielding Eq.~\eqref{eq:ranking_reversal}.

The threshold depends on the competing visual and textual margins rather
than on a universal gate value. A strongly biased branch can reverse the
fusion with a smaller weight, whereas a reliable branch with a large
opposing margin tolerates more contribution from the corrupted modality.

Proposition~\ref{prop:dominance} connects the failure to the gate: once
an unreliable modality exceeds the threshold, the fused ranking reverses.
MG-MTTA constrains this direction explicitly rather than leaving modality
dominance to entropy optimization. We next evaluate whether the proposed reliability signals guide fusion
effectively and reduce confidence-amplified errors in practice.

\section{Experiments}
\label{sec:experiments}

We evaluate both robustness and the proposed failure mechanism. The main
benchmark covers visual, textual, and joint shifts; separate diagnostics
relate the proxy to branch quality, measure confidence-amplified errors,
and compare objective variants under severe degradation.

We organize the evaluation around three questions: whether fusion
control improves robustness, whether the proxy ranks branch states by
relative quality, and whether matched entropy/diversity objectives
reproduce the same recovery under severe text-side degradation. The main
benchmark, mechanism diagnostics, and the strongest-probe objective
ablation are reported separately because they support different parts of
the argument.

\subsection{Experimental Setup}
\label{sec:protocol}

\paragraph{Shift settings.}
We evaluate visual, textual, and joint modality-specific shifts. Visual
shift uses CIFAR-100C and ImageNet-C with the 15 common corruption types
at severity~5~\cite{imagenet_c}. For textual shift, the image remains
unchanged while the prompt bank undergoes four progressively stronger
label-preserving perturbation levels (L1--L4), covering changes in
context, lexical form, tokenization, and local surface form. The protocol
keeps class identity fixed so that changes in the fused prediction can be
attributed to text-side degradation rather than semantic label changes.
It captures controlled forms of deployment-time variation in prompts,
templates, metadata, and tokenization, but is not intended as an
exhaustive benchmark for natural paraphrases or unrestricted user
queries. Textual accuracy is averaged over L1--L4. Joint shift combines
each textual condition with severity-5 visual corruption and reports the
corresponding average. L5 is reserved as a separate strongest text-side
probe and is excluded from the main benchmark.

\paragraph{Diagnostic settings.}
Two diagnostic protocols examine the reliability mechanism. A
severity-1/3/5 protocol tests whether anchor deviation responds to
branch-specific stress and separates branch states of different quality.
A second stream measures wrong-more-confident (WMC) events and total
fused-prediction error under textual and joint shift. A WMC event is an
incorrect adapted prediction whose confidence exceeds that of the
pre-adaptation fused prediction. Both diagnostics, as well as the L5
probe, are separate from the L1--L4 benchmark.

\paragraph{Baselines and implementation.}
Source-only performs no adaptation. Entropy-only updates the same
lightweight parameter subset as MG-MTTA by minimizing fused prediction
entropy, while Entropy + Diversity additionally optimizes the
batch-marginal diversity term. These matched baselines test whether
confidence sharpening and output diversity are sufficient without
differences in trainable capacity. External comparisons include
TENT~\cite{tent}, SAR~\cite{sar}, TPT~\cite{tpt}, and
BATCLIP~\cite{batclip}. TENT is reported as a visual-shift reference,
whereas SAR, TPT, and BATCLIP are evaluated under the same textual/joint
inputs and target ordering as MG-MTTA while retaining their original
adaptation procedures. The primary backbone is CLIP
ViT-B/16~\cite{vlm_clip}. MG-MTTA uses a batch size of 64, two adaptation
steps per batch, a learning rate of $10^{-3}$, confidence threshold
$\eta=0.7$, and EMA momentum $\mu=0.9$. The model is reset before each
shift condition. Experiments were conducted on a single NVIDIA H20 GPU,
and results are averaged over three random seeds.

\begin{table*}[t]
\centering
\caption{
\textbf{Performance and adaptation cost under modality-specific shift.}
Accuracy is reported in percentage. Textual results are averaged over
L1--L4, while Joint combines severity-5 visual corruption with L1--L4
textual shift. Runtime denotes the complete textual/joint evaluation.
TENT is reported only for visual shift. Bold and underlined accuracy values indicate
the best and second-best results, respectively. Cost minima are marked
among adapted methods with available measurements, and dashes denote
unavailable entries.
}
\label{tab:main_results}
\small
\setlength{\tabcolsep}{3.5pt}
\renewcommand{\arraystretch}{1.12}
\begin{tabular}{@{}lccccccc@{}}
\toprule
\multirow{2}{*}{Method}
& \multicolumn{2}{c}{Visual shift}
& \multicolumn{2}{c}{Multimodal shift}
& \multirow{2}{*}{\makecell{Trainable\\params.}}
& \multirow{2}{*}{\makecell{Runtime\\Text/Joint}}
& \multirow{2}{*}{\makecell{Peak\\memory}} \\
\cmidrule(lr){2-3}
\cmidrule(lr){4-5}
& CIFAR-100C
& ImageNet-C
& Textual
& Joint
& & & \\
\midrule

\multicolumn{8}{@{}l}{\textit{Frozen source model}} \\
Source-only
& 35.80
& 25.45
& 57.97
& 21.68
& 0
& 01:29 / 02:14
& -- \\
\addlinespace[2pt]

\multicolumn{8}{@{}l}{\textit{External adaptation methods}} \\
TENT
& 37.96
& 25.64
& --
& --
& --
& --
& -- \\

SAR
& \underline{41.42}
& \underline{29.70}
& 56.84
& 25.61
& 31.7K
& 36:42 / 56:37
& 1.95 GB \\

TPT
& 36.20
& 24.90
& 56.05
& 24.07
& \textbf{2.0K}
& 02:41:00 / 04:04:55
& 5.80 GB \\

BATCLIP
& \textbf{42.21}
& \textbf{30.70}
& \underline{60.61}
& \textbf{29.29}
& 65.5K
& 40:57 / 01:05:38
& 2.60 GB \\
\addlinespace[2pt]

\multicolumn{8}{@{}l}{\textit{Matched adaptation baselines}} \\
Entropy-only
& 37.96
& 26.39
& 55.40
& 21.68
& 41.0K
& --
& -- \\

Entropy + Diversity
& 38.10
& 26.29
& 55.38
& 21.76
& 41.0K
& --
& -- \\
\midrule

\textbf{MG-MTTA (Ours)}
& 38.11
& 26.32
& \textbf{66.51}
& \underline{26.27}
& 41.0K
& \textbf{26:26 / 39:32}
& \textbf{1.64 GB} \\
\bottomrule
\end{tabular}
\end{table*}

\subsection{Main Results}
\label{sec:overall}

Table~\ref{tab:main_results} reports top-1 accuracy and adaptation cost.
Visual results are averaged over 15 severity-5 corruptions, whereas
Textual and Joint results are averaged over L1--L4.

The matched baselines show that confidence alone can be misleading.
Under textual shift, Entropy-only lowers accuracy from 57.97\% to
55.40\%, and adding diversity gives 55.38\%. Under joint shift,
Entropy-only remains at the 21.68\% Source-only result, while Entropy +
Diversity reaches 21.76\%. Lower fused entropy therefore does not identify
a reliable adaptation direction. If the more corrupted modality already
dominates fusion, sharpening can reinforce rather than correct that
dominance.

MG-MTTA reaches 66.51\% under textual shift, 8.54 points above
Source-only and 11.11 above Entropy-only. Under joint shift, MG-MTTA reaches 26.27\%, improving over
Entropy-only and Entropy + Diversity by 4.59 and 4.51 points,
respectively. The gains are largest
when branch quality is asymmetric and modality weighting directly affects
the fused decision. These results indicate that entropy reduction alone is insufficient to
determine which modality should dominate the fused prediction.

Under textual shift, the unchanged visual branch provides a clearer
relative reference for the gate. Under joint shift, both references may
deteriorate, making the preferred fusion direction less distinct. The larger gain under textual shift is consistent with this asymmetry,
although the proxy remains a relative ranking signal rather than a
clean-modality estimator.

On visual-only shifts, MG-MTTA obtains 38.11\% on CIFAR-100C and
26.32\% on ImageNet-C, below SAR and BATCLIP; BATCLIP also leads on
the joint average. The methods intervene at
different levels: BATCLIP adapts bimodal representations and alignment,
whereas MG-MTTA regulates modality contribution after branch predictions
are formed. These results indicate that fusion-level adaptation complements, rather
than replaces, alignment-based adaptation.

MG-MTTA updates 41.0K parameters, completes the textual and joint
evaluations in 26:26 and 39:32, and uses 1.64\,GB peak memory. Since both
encoders are frozen, this cost comes only from the lightweight fusion
parameters. Thus, the additional adaptation cost is limited to the lightweight fusion
module. Aggregate accuracy, however, does not show whether the gate
responds to the intended modality. We therefore examine the reliability
proxy directly.

\subsection{Reliability Analysis}
\label{sec:reliability_diagnostic}

As textual severity increases from 1 to 3 and 5, the text-side anchor
deviation rises from 1.154 to 1.213 and 1.617, while the visual-side
reference remains near 1.138, indicating a branch-selective response.
In the same diagnostic stream, MG-MTTA improves over the corresponding
pre-adaptation predictions by 0.50, 2.90, and 0.55 percentage points
under textual shift and by 1.50, 3.40, and 3.70 points under joint shift
at severities 1, 3, and 5, respectively. These diagnostics are separate from the L1--L4 benchmark. They support the
proxy as a relative ranking signal, not as a calibrated estimator or a
per-sample test of the theoretical conditions.

\begin{figure*}[t]
    \centering
    \includegraphics[
        width=0.88\textwidth
    ]{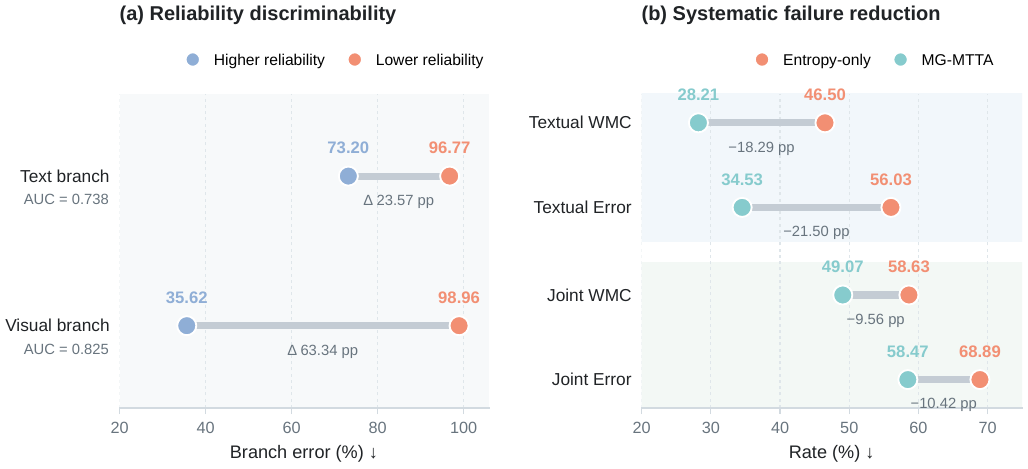}
    \caption{
    \textbf{Reliability and systematic failure diagnostics.}
    \textbf{(a)} Branches assigned higher relative reliability exhibit
    lower branch-specific prediction error, with AUCs of 0.738 and 0.825
    for the textual and visual branches, respectively.
    \textbf{(b)} MG-MTTA reduces wrong-more-confident (WMC) events and
    total fused-prediction error relative to Entropy-only under textual
    and joint shift. Panel~(a) reports branch-level statistics, whereas
    Panel~(b) evaluates the final fused prediction. The panels use
    separate diagnostic protocols and are not derived from the L1--L4
    benchmark.
    }
    \label{fig:reliability_failure}
\end{figure*}

Figure~\ref{fig:reliability_failure}(a) groups branch states by relative
reliability. For the textual branch, the higher- and lower-reliability
groups have error rates of 73.20\% and 96.77\%, respectively, with
AUC $=0.738$. The corresponding visual-branch errors are 35.62\% and
98.96\%, with AUC $=0.825$. These branch-level statistics assess ranking
quality rather than calibration or final fused accuracy. Although higher
relative reliability is associated with lower error, the 73.20\% error
of the higher-reliability textual group shows that the proxy is not an
oracle of correctness. It supplies only a directional signal for reducing the influence of the
branch with greater anchor deviation.

\subsection{Failure Analysis}
\label{sec:wmc_diagnostic}
To further analyze failure behavior beyond top-1 accuracy, we measure
wrong-more-confident (WMC) events. A WMC event occurs when adaptation
produces an incorrect fused prediction with higher confidence than its
pre-adaptation counterpart.

Figure~\ref{fig:reliability_failure}(b) reports WMC and total fused error.
Under textual shift, Entropy-only yields 46.50\% WMC and 56.03\% error;
MG-MTTA lowers them to 28.21\% and 34.53\%, reductions of 18.29 and
21.50 points. The accuracy gain is accompanied by fewer cases in which adaptation
increases confidence in an incorrect prediction; the reduction is not
explained solely by more correct final labels.

Under joint shift, Entropy-only yields 58.63\% WMC and 68.89\% total
error, while MG-MTTA reduces them to 49.07\% and 58.47\%, drops of 9.56
and 10.42 points. The remaining WMC rate is higher than under textual
shift, consistent with both modality references being degraded. Even in
this case, controlling modality contribution reduces both overall error
and confidence amplification.

Entropy-only controls posterior sharpness but not which modality drives
the fused decision. MG-MTTA constrains this direction through the gate
before sharpening, so the comparison isolates modality control rather
than trainable capacity. The observed WMC reduction supports the proposed
mechanism but remains empirical and does not verify the theoretical
sufficient conditions.

\subsection{Ablation Study}
\label{sec:ablation}

The L5 condition tests whether entropy minimization or batch diversity
can explain recovery under the strongest textual perturbation. The
matched baselines use the same trainable parameters as MG-MTTA but omit
reliability-aware fusion control.

\begin{table}[t]
\centering
\caption{
\textbf{Objective ablation under the L5 strongest probe.}
Results are top-1 accuracy (\%). L5 is a separate diagnostic condition
and is excluded from the L1--L4 main benchmark.
}
\label{tab:l5_ablation}
\small
\setlength{\tabcolsep}{6.5pt}
\renewcommand{\arraystretch}{1.10}
\begin{tabular}{@{}lcc@{}}
\toprule
Method
& Textual L5
& Joint L5 \\
\midrule
Source-only
& 25.48
& 9.66 \\

Entropy-only
& 22.11
& 9.40 \\

Entropy + Diversity
& 22.14
& 9.39 \\

\textbf{MG-MTTA (Ours)}
& \textbf{65.88}
& \textbf{26.13} \\
\bottomrule
\end{tabular}
\end{table}

Table~\ref{tab:l5_ablation} shows that confidence sharpening does not
recover from severe textual degradation. Entropy-only lowers textual L5
accuracy from 25.48\% to 22.11\% and joint L5 accuracy from 9.66\% to
9.40\%; adding diversity gives 22.14\% and 9.39\%. Batch diversity can
prevent global output collapse, but cannot decide which branch should
dominate an individual prediction. It is insufficient when the failure
is reliability imbalance rather than inadequate output diversity.

MG-MTTA reaches 65.88\% under textual L5 and 26.13\% under joint L5,
40.40 and 16.47 points above Source-only. The matched baselines use the same trainable capacity, so parameter count
does not explain the recovery. The main difference from the matched baselines is therefore the
reliability-guided gate constraint.

The ablation does not isolate the anchor, conflict, and gate signals
individually, nor does it establish necessity for every adaptation
framework. Its conclusion is limited to the matched setting: entropy and
diversity do not account for the recovery without explicit control of
modality contribution.

Overall, the diagnostics show that the reliability proxy responds to
modality-specific degradation and that MG-MTTA reduces WMC failures.
The L5 ablation further shows that matched entropy and diversity
objectives do not explain the observed recovery.

\section{Conclusion and Limitations}

MG-MTTA treats multimodal TTA as control of modality contribution, not
confidence alone. Under modality-specific shift, entropy minimization can
amplify an unreliable branch and produce WMC errors. Majorization
characterizes a candidate de-mixing direction, while anchor consistency
and cross-modal conflict guide a lightweight fusion gate with frozen
encoders.

Under controlled textual and joint shifts, reliability-aware fusion
improves accuracy and reduces WMC relative to matched entropy-based
baselines. Visual-only results remain below SAR and BATCLIP, so fusion
control complements rather than replaces representation- and
alignment-level adaptation.

The evaluation is limited to classification under controlled,
label-preserving visual and textual perturbations; retrieval, VQA,
unrestricted user queries, and broad paraphrase distributions are outside
scope. The proxy may weaken when both modalities degrade similarly or
anchors accumulate bias, and fusion control cannot recover evidence lost
inside an encoder. The current formulation assumes two-modality late
fusion; target-order and batch-size sensitivity, stronger temporal
references, and broader tasks remain open.

\begin{acks}
The authors gratefully acknowledge partial support from the
Undergraduate Innovation Training Program of Guangdong University of
Technology (Project No.~xj2026118450434).
\end{acks}

\bibliographystyle{ACM-Reference-Format}
\bibliography{ref}

@article{tta_survey,
  title = {A Comprehensive Survey on Test-Time Adaptation Under Distribution Shifts},
  volume = {133},
  ISSN = {1573-1405},
  url = {http://dx.doi.org/10.1007/s11263-024-02181-w},
  DOI = {10.1007/s11263-024-02181-w},
  number = {1},
  journal = {International Journal of Computer Vision},
  publisher = {Springer Science and Business Media LLC},
  author = {Liang,  Jian and He,  Ran and Tan,  Tieniu},
  year = {2024},
  month = jul,
  pages = {31–64}
}

@misc{tent,
  doi = {10.48550/ARXIV.2006.10726},
  url = {https://arxiv.org/abs/2006.10726},
  author = {Wang,  Dequan and Shelhamer,  Evan and Liu,  Shaoteng and Olshausen,  Bruno and Darrell,  Trevor},
  title = {Tent: Fully Test-time Adaptation by Entropy Minimization},
  publisher = {arXiv},
  year = {2020},
  copyright = {arXiv.org perpetual,  non-exclusive license}
}

@misc{cotta,
  doi = {10.48550/ARXIV.2203.13591},
  url = {https://arxiv.org/abs/2203.13591},
  author = {Wang,  Qin and Fink,  Olga and Van Gool,  Luc and Dai,  Dengxin},
  title = {Continual Test-Time Domain Adaptation},
  publisher = {arXiv},
  year = {2022},
  copyright = {arXiv.org perpetual,  non-exclusive license}
}

@misc{eata,
  doi = {10.48550/ARXIV.2204.02610},
  url = {https://arxiv.org/abs/2204.02610},
  author = {Niu,  Shuaicheng and Wu,  Jiaxiang and Zhang,  Yifan and Chen,  Yaofo and Zheng,  Shijian and Zhao,  Peilin and Tan,  Mingkui},
  title = {Efficient Test-Time Model Adaptation without Forgetting},
  publisher = {arXiv},
  year = {2022},
  copyright = {arXiv.org perpetual,  non-exclusive license}
}

@misc{sar,
  doi = {10.48550/ARXIV.2302.12400},
  url = {https://arxiv.org/abs/2302.12400},
  author = {Niu,  Shuaicheng and Wu,  Jiaxiang and Zhang,  Yifan and Wen,  Zhiquan and Chen,  Yaofo and Zhao,  Peilin and Tan,  Mingkui},
  title = {Towards Stable Test-Time Adaptation in Dynamic Wild World},
  publisher = {arXiv},
  year = {2023},
  copyright = {arXiv.org perpetual,  non-exclusive license}
}

@misc{vlm_clip,
  doi = {10.48550/ARXIV.2103.00020},
  url = {https://arxiv.org/abs/2103.00020},
  author = {Radford,  Alec and Kim,  Jong Wook and Hallacy,  Chris and Ramesh,  Aditya and Goh,  Gabriel and Agarwal,  Sandhini and Sastry,  Girish and Askell,  Amanda and Mishkin,  Pamela and Clark,  Jack and Krueger,  Gretchen and Sutskever,  Ilya},
  title = {Learning Transferable Visual Models From Natural Language Supervision},
  publisher = {arXiv},
  year = {2021},
  copyright = {arXiv.org perpetual,  non-exclusive license}
}

@misc{vlm_siglip,
  doi = {10.48550/ARXIV.2303.15343},
  url = {https://arxiv.org/abs/2303.15343},
  author = {Zhai,  Xiaohua and Mustafa,  Basil and Kolesnikov,  Alexander and Beyer,  Lucas},
  title = {Sigmoid Loss for Language Image Pre-Training},
  publisher = {arXiv},
  year = {2023},
  copyright = {Creative Commons Attribution 4.0 International}
}

@misc{vlm_flamingo,
  doi = {10.48550/ARXIV.2204.14198},
  url = {https://arxiv.org/abs/2204.14198},
  author = {Alayrac,  Jean-Baptiste and Donahue,  Jeff and Luc,  Pauline and Miech,  Antoine and Barr,  Iain and Hasson,  Yana and Lenc,  Karel and Mensch,  Arthur and Millican,  Katie and Reynolds,  Malcolm and Ring,  Roman and Rutherford,  Eliza and Cabi,  Serkan and Han,  Tengda and Gong,  Zhitao and Samangooei,  Sina and Monteiro,  Marianne and Menick,  Jacob and Borgeaud,  Sebastian and Brock,  Andrew and Nematzadeh,  Aida and Sharifzadeh,  Sahand and Binkowski,  Mikolaj and Barreira,  Ricardo and Vinyals,  Oriol and Zisserman,  Andrew and Simonyan,  Karen},
  title = {Flamingo: a Visual Language Model for Few-Shot Learning},
  publisher = {arXiv},
  year = {2022},
  copyright = {arXiv.org perpetual,  non-exclusive license}
}

@misc{imagenet_c,
  doi = {10.48550/ARXIV.1903.12261},
  url = {https://arxiv.org/abs/1903.12261},
  author = {Hendrycks,  Dan and Dietterich,  Thomas},
  title = {Benchmarking Neural Network Robustness to Common Corruptions and Perturbations},
  publisher = {arXiv},
  year = {2019},
  copyright = {arXiv.org perpetual,  non-exclusive license}
}

@misc{prompt_shift,
  doi = {10.48550/ARXIV.2402.10099},
  url = {https://arxiv.org/abs/2402.10099},
  author = {Xiao,  Zehao and Shen,  Jiayi and Derakhshani,  Mohammad Mahdi and Liao,  Shengcai and Snoek,  Cees G. M.},
  title = {Any-Shift Prompting for Generalization over Distributions},
  publisher = {arXiv},
  year = {2024},
  copyright = {Creative Commons Attribution 4.0 International}
}

@misc{tpt,
  doi = {10.48550/ARXIV.2209.07511},
  url = {https://arxiv.org/abs/2209.07511},
  author = {Shu,  Manli and Nie,  Weili and Huang,  De-An and Yu,  Zhiding and Goldstein,  Tom and Anandkumar,  Anima and Xiao,  Chaowei},
  title = {Test-Time Prompt Tuning for Zero-Shot Generalization in Vision-Language Models},
  publisher = {arXiv},
  year = {2022},
  copyright = {arXiv.org perpetual,  non-exclusive license}
}

@misc{tda,
  doi = {10.48550/ARXIV.2403.18293},
  url = {https://arxiv.org/abs/2403.18293},
  author = {Karmanov,  Adilbek and Guan,  Dayan and Lu,  Shijian and Saddik,  Abdulmotaleb El and Xing,  Eric},
  title = {Efficient Test-Time Adaptation of Vision-Language Models},
  publisher = {arXiv},
  year = {2024},
  copyright = {Creative Commons Attribution Non Commercial No Derivatives 4.0 International}
}

@misc{batclip,
  doi = {10.48550/ARXIV.2412.02837},
  url = {https://arxiv.org/abs/2412.02837},
  author = {Maharana,  Sarthak Kumar and Zhang,  Baoming and Karlinsky,  Leonid and Feris,  Rogerio and Guo,  Yunhui},
  title = {$\texttt{BATCLIP}$: Bimodal Online Test-Time Adaptation for CLIP},
  publisher = {arXiv},
  year = {2024},
  copyright = {Creative Commons Attribution 4.0 International}
}

@inproceedings{
reliability_bias,
title={Test-time Adaptation against Multi-modal Reliability Bias},
author={Mouxing Yang and Yunfan Li and Changqing Zhang and Peng Hu and Xi Peng},
booktitle={The Twelfth International Conference on Learning Representations},
year={2024},
url={https://openreview.net/forum?id=TPZRq4FALB}
}

@book{majorization_book,
  title = {Inequalities: Theory of Majorization and Its Applications},
  ISBN = {9780387682761},
  ISSN = {2197-568X},
  url = {http://dx.doi.org/10.1007/978-0-387-68276-1},
  DOI = {10.1007/978-0-387-68276-1},
  journal = {Springer Series in Statistics},
  publisher = {Springer New York},
  author = {Marshall,  Albert W. and Olkin,  Ingram and Arnold,  Barry C.},
  year = {2011}
}

@misc{multimodal_fusion,
  doi = {10.48550/ARXIV.2411.17040},
  url = {https://arxiv.org/abs/2411.17040},
  author = {Li,  Songtao and Tang,  Hao},
  title = {Multimodal Alignment and Fusion: A Survey},
  publisher = {arXiv},
  year = {2024},
  copyright = {arXiv.org perpetual,  non-exclusive license}
}

@INPROCEEDINGS{reliability_fusion,
  author={Sidheekh, Sahil and Tenali, Pranuthi and Mathur, Saurabh and Blasch, Erik and Natarajan, Sriraam},
  booktitle={2024 27th International Conference on Information Fusion (FUSION)}, 
  title={On the Robustness and Reliability of Late Multi-Modal Fusion using Probabilistic Circuits}, 
  year={2024},
  volume={},
  number={},
  pages={1-8},
  doi={10.23919/FUSION59988.2024.10706372}}

@article{uncertainty_multimodal,
title = {A novel multi-modal fusion method based on uncertainty-guided meta-learning},
journal = {Pattern Recognition},
volume = {158},
pages = {110993},
year = {2025},
issn = {0031-3203},
doi = {https://doi.org/10.1016/j.patcog.2024.110993},
url = {https://www.sciencedirect.com/science/article/pii/S0031320324007441},
author = {Duoyi Zhang and Md Abul Bashar and Richi Nayak}
}

@misc{vte,
  doi = {10.48550/ARXIV.2405.14977},
  url = {https://arxiv.org/abs/2405.14977},
  author = {D\"{o}bler,  Mario and Marsden,  Robert A. and Raichle,  Tobias and Yang,  Bin},
  title = {A Lost Opportunity for Vision-Language Models: A Comparative Study of Online Test-Time Adaptation for Vision-Language Models},
  publisher = {arXiv},
  year = {2024},
  copyright = {arXiv.org perpetual,  non-exclusive license}
}

@misc{towards_robust_prompts,
  doi = {10.48550/ARXIV.2304.08479},
  url = {https://arxiv.org/abs/2304.08479},
  author = {Gu,  Jindong and Beirami,  Ahmad and Wang,  Xuezhi and Beutel,  Alex and Torr,  Philip and Qin,  Yao},
  title = {Towards Robust Prompts on Vision-Language Models},
  publisher = {arXiv},
  year = {2023},
  copyright = {arXiv.org perpetual,  non-exclusive license}
}

@misc{carot_vlm,
  doi = {10.48550/ARXIV.2311.01723},
  url = {https://arxiv.org/abs/2311.01723},
  author = {Oh,  Changdae and Lim,  Hyesu and Kim,  Mijoo and Han,  Dongyoon and Yun,  Sangdoo and Choo,  Jaegul and Hauptmann,  Alexander and Cheng,  Zhi-Qi and Song,  Kyungwoo},
  title = {Towards Calibrated Robust Fine-Tuning of Vision-Language Models},
  publisher = {arXiv},
  year = {2023},
  copyright = {Creative Commons Attribution 4.0 International}
}

@article{cpl_vlm,
  title = {Consistent prompt learning for vision-language models},
  volume = {310},
  ISSN = {0950-7051},
  url = {http://dx.doi.org/10.1016/j.knosys.2025.112974},
  DOI = {10.1016/j.knosys.2025.112974},
  journal = {Knowledge-Based Systems},
  publisher = {Elsevier BV},
  author = {Zhang,  Yonggang and Tian,  Xinmei},
  year = {2025},
  month = feb,
  pages = {112974}
}

@INPROCEEDINGS{craft_prompt_robust,
  author={Sun, Jingchen and Sharma, Rohan and Lokhande, Vishnu Suresh and Chen, Changyou},
  booktitle={2025 IEEE/CVF Winter Conference on Applications of Computer Vision (WACV)}, 
  title={Cross-Modal Feature Alignment and MMD Improve Robustness of Prompt Tuning}, 
  year={2025},
  volume={},
  number={},
  pages={4714-4724},
  doi={10.1109/WACV61041.2025.00462}}

@misc{provable_dynamic_fusion,
  doi = {10.48550/ARXIV.2306.02050},
  url = {https://arxiv.org/abs/2306.02050},
  author = {Zhang,  Qingyang and Wu,  Haitao and Zhang,  Changqing and Hu,  Qinghua and Fu,  Huazhu and Zhou,  Joey Tianyi and Peng,  Xi},
  title = {Provable Dynamic Fusion for Low-Quality Multimodal Data},
  publisher = {arXiv},
  year = {2023},
  copyright = {Creative Commons Attribution 4.0 International}
}

@ARTICLE{uncertainty_compensation_fusion,
  author={Papandreou, George and Katsamanis, Athanassios and Pitsikalis, Vassilis and Maragos, Petros},
  journal={IEEE Transactions on Audio, Speech, and Language Processing}, 
  title={Adaptive Multimodal Fusion by Uncertainty Compensation With Application to Audiovisual Speech Recognition}, 
  year={2009},
  volume={17},
  number={3},
  pages={423-435},
  doi={10.1109/TASL.2008.2011515}}

@book{entropy_majorization_book,
  title = {Entropy,  Divergence,  and Majorization in Classical and Quantum Thermodynamics},
  ISBN = {9789811666445},
  ISSN = {2197-1765},
  url = {http://dx.doi.org/10.1007/978-981-16-6644-5},
  DOI = {10.1007/978-981-16-6644-5},
  journal = {SpringerBriefs in Mathematical Physics},
  publisher = {Springer Singapore},
  author = {Sagawa,  Takahiro},
  year = {2022}
}

@article{zhou2022learning,
  title={Learning to prompt for vision-language models},
  author={Zhou, Kaiyang and Yang, Jingkang and Loy, Chen Change and Liu, Ziwei},
  journal={International journal of computer vision},
  volume={130},
  number={9},
  pages={2337--2348},
  year={2022},
  publisher={Springer}
}

@ARTICLE{lin1991js,
  author={Lin, J.},
  journal={IEEE Transactions on Information Theory}, 
  title={Divergence measures based on the Shannon entropy}, 
  year={1991},
  volume={37},
  number={1},
  pages={145-151},
  doi={10.1109/18.61115}}

@article{wang2026orca,
  title={Orca: The World is in Your Mind},
  author={Wang, Yihao and Ji, Yuheng and Cao, Mingyu and Shen, Yanqing and Xiao, Runze and Lyu, Huaihai and Xie, Senwei and Liu, Euan and Tian, Klara and Long, Tianfeng and others},
  journal={arXiv preprint arXiv:2606.30534},
  year={2026}
}

@article{COMBINER,
  title={COMBINER: Composed Image Retrieval Guided by Attribute-based Neighbor Relations},
  author={Li, Zixu and Hu, Yupeng and Chen, Zhiwei and Wen, Haokun and Song, Xuemeng and Nie, Liqiang},
  journal={IEEE TIP},
  year={2026},
  publisher={IEEE}
}

@inproceedings{ConeSep,
  title={Conesep: Cone-based robust noise-unlearning compositional network for composed image retrieval},
  author={Li, Zixu and Hu, Yupeng and Chen, Zhiwei and Zhang, Mingyu and Fu, Zhiheng and Nie, Liqiang},
  booktitle={Proceedings of the IEEE/CVF Conference on Computer Vision and Pattern Recognition},
  pages={16897--16909},
  year={2026}
}

@inproceedings{ReTrack,
  title={ReTrack: Evidence-Driven Dual-Stream Directional Anchor Calibration Network for Composed Video Retrieval},
  author={Li, Zixu and Hu, Yupeng and Chen, Zhiwei and Huang, Qinlei and Qiu, Guozhi and Fu, Zhiheng and Liu, Meng},
  booktitle={AAAI},
  volume={40},
  number={28},
  pages={23373--23381},
  year={2026}
}

\end{document}